\let\Ginclude@graphics\@org@Ginclude@graphics 
\mathchardef\hyph="2D
\newcommand{\sA}{\mathcal{A}}
\newcommand{\sB}{\mathcal{B}}
\newcommand{\sC}{\mathcal{C}}
\newcommand{\sP}{\mathcal{P}}
\newcommand{\sS}{\mathcal{S}}
\DeclareMathOperator*{\argmax}{argmax}
\DeclareMathOperator*{\simf}{sim}
\newcommand{\util}{\mathrm{util}}
\title[Max-Utility Based Arm Selection Strategy]{Max-Utility Based Arm Selection Strategy For Sequential Query Recommendations}
  \author{
   \Name{Shameem A. Puthiya Parambath} \Email{sham.puthiya@glasgow.ac.uk}\\
   \Name{Christos Anagnostopoulos} \Email{christos.anagnostopoulos@glasgow.ac.uk}\\
   \Name{Roderick Murray-Smith} \Email{roderick.murray-smith@glasgow.ac.uk}\\
   \Name{Sean MacAvaney} \Email{sean.macavaney@glasgow.ac.uk}\\
   \addr University of Glasgow, Glasgow, UK
\AND  
  \Name{Evangelos Zervas} \Email{ezervas@uniwa.gr}\\
  \addr University of West Attica, Greece
 }
\begin{document}

\maketitle

\begin{abstract}
We consider the query recommendation problem in closed loop interactive learning settings like online information gathering and exploratory analytics.
The problem can be naturally modelled using the Multi-Armed Bandits (MAB) framework with countably many arms.
The standard MAB algorithms for countably many arms begin with selecting a random set of candidate arms and then applying standard MAB algorithms, e.g., UCB, on this candidate set downstream.
We show that such a selection strategy often results in higher cumulative regret and to this end, we propose a selection strategy based on the maximum utility of the arms.
We show that in tasks like online information gathering, where sequential query recommendations are employed, the sequences of queries are correlated and the number of potentially optimal queries can be reduced to a manageable size by selecting queries with maximum utility with respect to the currently executing query.
Our experimental results using a recent real online literature discovery service log file demonstrate that the proposed arm selection strategy improves the cumulative regret substantially with respect to the state-of-the-art baseline algorithms. 
Our data model and source code are available at ~\url{https://anonymous.4open.science/r/0e5ad6b7-ac02-4577-9212-c9d505d3dbdb/}.
\end{abstract}
\begin{keywords}
query recommendation, multi-armed bandits, arm selection, maximum utility
\end{keywords}

\section{Introduction}
We consider the problem of query recommendation in closed loop interactive computing environments, like exploratory data analysis, and online information discovery/gathering.
In such applications, a user (data analyst) starts off a session by issuing an initial query related to a \emph{topic of investigation/exploration} to the data system and, then, exploring the topic in-depth by executing further queries.
Query recommendation algorithms are employed to recommend these `future' further queries based on previously issued queries to improve user experience \citep{baeza2004query}.
Furthermore, in this context, query recommendation algorithms can be envisaged as a pillar component in resource-efficient decision making methods in data management systems.
In addition to providing recommendations for data exploration tasks, effective query recommendation algorithms are expected to greatly improve the way current data systems work (including query processing, pre-fetching data, pre-analysing data, managing cached data).

Forecasting the correct next-query helps the system to proactively reserve resources and be prepared for any immediate downstream tasks.
For example, if the forecast next query requires loading (or transferring through the network) a considerably huge amount of data to the main memory or running a CPU intensive task, the system can prefetch the data, free up caches and memory/processes to accommodate the tasks to carry out in the immediate short-term future.
These advantages of timely next-query forecasting will anticipate impact on the way data centers work and schedule tasks in terms of throughput and scalability in task scheduling.
Hence, an efficient query recommendation (forecasting) engine not only improves user engagement but also improves system performance and application-driven Quality of Experience.

Typical use case scenarios include: \emph{(i)} data scientists analyzing a large volume of data to obtain in-depth knowledge about the data for follow-up tasks like, 
data trend explanation \citep{savva2020}, report summarization \citep{marcel2011survey} and \emph{(ii)} users gathering information by discovering scholarly articles to conduct literature review using online services \citep{krause2011submodularity}.
These tasks can be conceptualized as the system 
recommending queries and the user either accepting or rejecting these recommendations, thus forming a closed loop interactive environment.
This learning environment can be viewed as a repeated game between an online algorithm and the user.
At each trial $t = 1,\ldots,T$, the algorithm chooses a query for recommendation from a very large query set $\sA$.
Based on the utility of the recommendation, the user chooses either to execute the query or ignore it, thus resulting in a reward.
Like in the standard recommendation settings, the rewards are in the form of `clicks' for the correctly recommended queries.

The MAB framework is popular in personalized recommender systems to model the trade-off between \emph{exploration} and \emph{exploitation} over a set of items (queries in our context) with unknown reward distribution. We model query recommendations using the MAB framework.
The standard MAB algorithms assume that the number of arms is fixed and relatively smaller than the number of trials.
In query recommendation, the total number of queries (arms) far exceeds the number of times users will use the recommendation platform.
Hence, the number of queries for recommendation is inherently huge and assumed to be countably many.
We do not consider the queries to be infinite as there can be many implausible queries and query sequences.
In addition, unlike in personalized recommendation settings, the queries can be about very general topics like COVID, MAB or SQL statements, and need not have a personalization component.
Our goal is to recommend the next query to be executed based on past executed queries and the \emph{topic of investigation}, i.e., currently executing query.

As opposed to the standard MAB settings, countably many armed bandit algorithms have to choose from an extremely large number of arms, often much larger than the number of experimental trials or time horizon.
The sheer volume of possible arms make it computationally impossible to try each of the arms even once.
The standard way to deal with countably many arms is to either randomly select a candidate set containing a fixed ($k$) but reduced number of arms ($k \ll T \ll |\sA|$) from the pool of arms, and run standard MAB algorithms on this reduced arm set \citep{wang2008algorithms,KalvitZ20,yinglun2020,bayati_2020}, or exploit the benign reward structure of the arms \citep{kleinberg_2019,magureanu2014lipschitz}.
As we demonstrate in our experimental evaluation, the random selection strategy often results in the optimal or near-optimal arms to be ignored in the selection process and, thus, hindering the performance of the bandit algorithm.
The algorithms that exploit the reward structure of the arm contexts also fall short of expectations.
As demonstrated in the experiments, the zooming based algorithm \citep{kleinberg_2019} performs sub-optimally and the OSLB \citep{magureanu2014lipschitz} is infeasible in our case as it requires one to solve an LP in each round.
Hence, the key component in the algorithm design for countably many armed bandit problems is the arm selection strategy.
To this end, we propose a strategy to select the candidate set of the most promising arms based on maximizing the utility of an arm with respect to the currently playing arm.

Given the currently executing query (currently playing arm)\footnote{At the start of the session, the initial query will be the currently playing arm and as user interacts with the system the currently playing arm will be the recommended query, if the user accepts the recommendation, a newly issued query, otherwise.}, we define the `goodness of arm for selection' with respect to the currently playing arm based on the notion of utility.
Our assumption is similar to the assumption that the rewards are Lipschitz function of the arm contexts \citep{magureanu2014lipschitz,kleinberg_2019}.
Furthermore, we model the likelihood, i.e., probability of a query to be preferred by the current arm as a function of pairwise similarity between them. 
Specifically, the currently executing query provides us with the context information of the topic the user is interested in, and we make use of the shared context information between different queries, represented as real valued vectors, to choose the best next query to run.
Then, we propose an effective selection strategy based on utility maximization. For theoretically plausible utility functions, our selection strategy reduces to monotonic submodular maximization problem with cardinality constraint.
For a very large set of arms, the submodular maximization problem with cardinality constraint can be solved in nearly linear time using a distributed greedy algorithm \citep{mirzasoleiman2016distributed}.
Our candidate set selection strategy allows us to dynamically set the number of arms in the candidate set, following the design principles \citep{berry1997bandit}: 
\emph{(i)} when the number of trials is large, the algorithm is allowed to sacrifice short term gain by eschewing arms with larger reward, if necessary, while exploring for arms with even larger reward that will expect to yield a long-term benefit, \emph{(ii)} when the number of trials is small, the algorithm is allowed to eschew new arm exploration in favor of selecting an arm that has a large immediate reward.
We also elaborate on the case of setting the value of~$k$ in `anytime' bandit settings.

The remainder of this paper is organized as follows: after a brief overview of the algorithms for countably many armed bandit problems and query recommendations in Section~\ref{sec:rel_work}, we describe our framework and elaborate on the candidate set selection algorithm in Section~\ref{sec:qr_rmd}.
In Section~\ref{sec:exp}, we report the results of our experimental study on real world sequential query recommendation for online literature discovery settings and compare against strong baselines like zooming algorithm \citep{kleinberg_2019}.
Section~\ref{sec:conc} concludes the paper.

\section{Related Work}
\label{sec:rel_work}
We present studies related to our problem in different areas.
\paragraph{\textbf{Multi-Armed Bandits:}}
We limit our discussion to Countably Many Armed Bandit (CMAB) algorithms.
For standard MAB algorithms, we recommend the readers to refer to \citet{cesa2006prediction,lattimore2020bandit}.
As opposed to standard MAB settings, in CMAB settings, bandit algorithms have to choose from an extremely large number of arms, often much larger than the possible number of experimental trials ($T$).
The sheer volume of possible arms makes it impossible to try each of the arms even once.
In stochastic bandit problems with countably many arms, the learner is restricted to ignore many arms without even trying them once, and dedicate the valuable exploration scheme only to a certain number of arms.
That is, in addition to the exploration-exploitation trade-off, which is typical to sequential learning algorithms, we need also to deal with the arm discovery-exploitation trade-off within the exploration phase. That is, while exploring, the algorithm has to decide whether it should try a new arm or revisit an already played arm to get a better estimate of the expected reward.
Recently, many algorithms have been proposed in the CMAB settings.
These algorithms can be broadly classified into: \emph{(i)} failure-based approaches and \emph{(ii)} pre-selection based approaches.

In failure-based approaches, there is no hard limit on the number of arms to explore. Such algorithms try different arms until the number of trials is exhausted or reach a predefined failure rate for the arms. Hence, the exploration phase lasts until the algorithms hit the end of time horizon.
In \citet{berry1997bandit}, the authors proposed \emph{k-failure}, where an arm is played until it incurs $k$ failures, and $m-run$, where \emph{1-failure} is used until $m$ arms are played or $m$ success is obtained, algorithms for Bernoulli arms.
Asymptotically, failure-based algorithms yield lower cumulative regret; but for finite $T < \infty$, they tend to perform poorly.
Additionally, failure-based algorithms require prior knowledge of $T$.
Similarly, \citet{herschkorn1996policies} proposed a \emph{non-recalling}, bounded memory, failure-based algorithm for Bernoulli arms. \citet{KalvitZ20} proposed an algorithm in a setting where the arms are partitioned into different types, and the goal is to find the arm from the superior type.
Their algorithm can be considered as failure-based, as it terminates only when the superior type is identified with large confidence.
The failure-based approaches are better suited for \emph{pure-exploration} settings. 

In the pre-selection based approach, only a specific number of arms are explored. This number can be either fixed in advance (before the start of the experiment) or adapted as the trial progresses.
Such approaches randomly choose $k$ arms from the pool of arms and use standard MAB algorithms on this subset of $k$ arms downstream.
Indicatively, the pre-selection based algorithm in \citet{wang2008algorithms} deals with selecting $k$ randomly chosen arms for exploration and exploitation.
The exact value of $k$ in \cite{wang2008algorithms} is defined as a function of the current trial count.
They also adopt the `goodness of arm' assumption which states that: the probability of the mean reward of a newly explored arm differing from the optimal arm is close to zero.
\citet{yinglun2020} proposed an algorithm in the CMAB settings with multiple best arms.
They also used $k$ arms selected uniformly at random from the arms pool.
In \citet{bayati_2020}, authors proposed a subsampling (uniformly at random without replacement) based greedy algorithm in CMAB settings.
The arm rewards of Lipschitz bandits \cite{magureanu2014lipschitz} are assumed to be \emph{smooth}, but arms are sampled sequentially till every arm is tried at least a fixed number of times.
The zooming algorithm proposed by \citet{kleinberg_2019} refines the region for sampling the arms based on the arms similarity.
In recent years, there is a surge of literature dealing with bandit algorithms in countably many, and, the closely related, infinite arm settings. The reader is advised to refer to \cite{KalvitZ20,yinglun2020,bayati_2020,kleinberg_2019,lattimore2020bandit} and references therein for a detailed coverage of different algorithms. 

In all the above discussed work, candidate arm selection is not carried out in a structured fashion and does not make use of the arms context effectively.
To the best of our knowledge, our proposed method is the first non-random candidate arm selection strategy that makes use of the context information, which is evaluated in the context of sequential query recommendation systems.

\paragraph{\textbf{Query Recommendations:}}
Query recommendation based on historical query logs is extensively studied in the Information Retrieval (IR) and Database communities \citep{baeza2004query,he2009web,dimitriadou2014explore,he2009web,li2019click,dehghani2017learning,rosset2020leading}.
Most of the earlier work on query recommendation made use of simple IR techniques like query similarity and query support \citep{baeza2004query}.
\citet{dimitriadou2014explore} proposed a query recommendation algorithm for interactive data exploration applications.
The proposed algorithm works by building on-the-fly decision tree classifiers from the user feedback obtained before the exploration starts.
Other prominent approaches include building offline probabilistic models using historical session data. 
In \cite{he2009web}, supervised \emph{N-gram} based Markov model is trained on historical session data to predict the sequence of next queries.
The Click Feedback-Aware Network algorithm proposed in \cite{li2019click}, models the sequential queries using deep neural networks.
Given a query, the proposed method predicts a ranked list of queries for recommendations.
The model is trained on positive and negative query instances, constructed from the historical logs.
Similar to \cite{li2019click}, \citet{dehghani2017learning} proposed a sequence-to-sequence Recurrent Neural Network model with query-aware attention mechanism. 
\citet{jiang2018rin} also used sequence-to-sequence recurrent neural networks with attention mechanisms to recommend next query.
The main difference between the above two approaches is: \citet{jiang2018rin} used a `query reformulation inferencer' to obtain homomorphic embedding of the queries.
As in the case of CMAB, recent years evidenced a surge in the query recommendation literature attributed mainly to the success of sequence-to-sequence deep learning models.
The reader is advised to refer to \cite{wu2018query,li2019click,jiang2018rin,dehghani2017learning} and references therein for a detailed coverage of different query recommendation algorithms. 

All the above work considers query recommendation from a pure supervised / weakly supervised learning setting.
The user feedback is not dealt with in an online fashion. Instead, it is used to create supervised training examples.
In our limited knowledge, our work is the first attempt to formulate query recommendation as a pure online learning task using the MAB framework.

\section{Countably Many Armed Contextual Bandits for Query Recommendations}
\label{sec:qr_rmd}
\subsection{Problem Fundamentals}
Many information filtering tasks, e.g., information discovery in the Web \citep{krause2011submodularity} or exploratory data analysis \citep{marcel2011survey}, involve interactive data exploration through user issued queries, often chosen from the recommendation provided by a query recommendation engine.
In such environments, conceptually, a (user) session starts with a user issuing a task or topic specific initial query to the data system.
Then, the system responds with a ranked list of relevant results and a query recommendation to be potentially executed at the next step of the process.
Depending on the usefulness (or utility) of the recommended query, a user may wish to either execute it or ignore it.
For meaningful exploration, the recommended queries are anticipated to be related to the topics (tasks) corresponding to the currently executing query and, thus, expected to be semantically similar to the current one.
Normally, the recommended queries are taken or derived from the historical query logs containing all queries executed in the past \citep{baeza2004query}.
Hence, the potential queries for recommendation are considered as countably many.
We envisage this problem as an optimal arm selection in stochastic multi-armed bandit settings with countably many arms \citep{KalvitZ20} where queries correspond to arms.

Let $\sA$ be the set of countably many arms and $a_i \in \sA$ be the currently played arm.
Our goal is to select an arm from the set $\sA^\prime = \sA \setminus \{a_{i}\}$, which results in the maximum cumulative reward.
The reward in our setting is the same as the reward in a typical recommendation problem using MAB. That is, rewards are in the form of clicks for the correctly recommended queries.
If the cardinality of $\sA^\prime$ is significantly less than the time horizon, one could use standard stochastic multi-armed bandit algorithms like Upper Confidence Bound (UCB) \citep{auer2002finite} or $\epsilon$-greedy \citep{lattimore2020bandit}.
However, as pointed out earlier, in query recommendation settings, the number of arms is countably many and any standard multi-armed bandit algorithm has to choose from an extremely large number of arms.
The sheer volume of possible arms makes it impossible to try each of the arms (even once), and consequently the algorithm is enforced to ignore many arms and dedicate the valuable query exploration only to a certain number of arms.

Standard CMAB algorithms randomly select a fixed number of arms, called \emph{candidate set}, from the pool of arms, and adopt standard MAB algorithms over this reduced arms set \citep{wang2008algorithms,KalvitZ20,yinglun2020,bayati_2020}.
The random selection might result in the optimal arms to be ignored in the selection process, thus, hindering the performance of the recommendation engine. In our method, we make use of the shared context information between queries to select the candidate set.
We assume that the queries are represented as $d$-dimensional real-valued vectors encoding the semantic content of the queries \citep{le2014distributed,mikolov2013distributed}.
These vectors can be regarded as the contexts associated with the arms.
Without ambiguity, we use the same notation $a_i$ to represent the context vector of arm $a_i$.
Conventional CMAB algorithms do not make use of context information when generating the candidate set.

We base our reasoning on using the context associated with an arm by analyzing a real query log files of an online literature discovery service. We plot the sequences of queries executed in five random user-sessions in \autoref{fig:query_plot}.
Specifically, we plot the 2-dimensional query context vectors (we used PCA to reduce the original dimension $d=128$) for five random user-sessions in five different colours (further details about the queries and sessions is given in Section~\ref{sec:exp} and the supplementary file).
From the plot, one can observe that the queries issued within each session are contextually similar with respect to standard similarity measures like $\cos$, and form relatively small loose clusters.
Based on this observation on query similarity, we propose to make use of the shared context information between the currently executing query and queries in $\sA^\prime$ for candidate selection.
The crux of our approach is the assumption that arms with similar context, with respect to a given arm, have similar \emph{utility}.
We formalize this assumption using the following definition of utility adopted from the information-theoretic interpretation of utility function in multi-agent systems. 
Specifically, one can conceive utility in terms of the preference probabilities for being at different states as given in \citep{Ortega2010}.

\begin{definition}{\textbf{Goodness of Arm for Selection}}~{\cite[Definition 1]{Ortega2010}}
\label{def:good}
Given the currently playing arm $a_{i}$, let $g\big(\{a_j\},\{a_i\}\big) = p(a_{j} \vert a_{i})$ be the conditional probability of the arm $a_{j}$ to be included in the candidate set by a selection strategy.
Then, there exists a real-valued utility function $\util$ which is: (i) sub-additive\footnote{Theorem:\ref{thm:ortega} in \cite{Ortega2010} holds for both additive and sub-additive utilities}, i.e., $\util\big(g\big(\{a_j, a_k\},\{a_i\}\big)\big) \leq \util\big(g\big(\{a_j\},\{a_i\}\big)\big) + \util\big(g\big(\{a_k\},\{a_i\}\big)\big)$; and (ii) consistent, i.e., $g\big(\{a_j\}, \{a_i\}\big) > g\big(\{a_k\},\{a_i\}\big) \Leftrightarrow \util\big(g\big(\{a_j\}, \{a_i\}\big)\big) > \util\big(g\big(\{a_k\}, \{a_i\}\big)\big)$.
\end{definition}

The conditional probability, $p(a_j \vert a_i)$ can be considered as the normalized preference score for the arm $a_j$ to be played next conditioned on the event that $a_i$ is the currently playing arm.
The utility function $\util$ defined above assigns a scalar value to each possible arm such that arms with higher utility correspond to arms that are more preferred.
However, the challenge in our setting is that the preference scores $p(a_j \vert a_i)$ are not known \emph{a priori}.


\subsection{Arm Preference Probability}
Typical multi-armed bandit algorithms work by adaptive hypothesis testing.
From an abstract point, such algorithms randomly pick two arms (assuming only two types of arms: optimal and non-optimal) and run on-the-fly hypothesis testing for a predefined duration to estimate statistical properties of the arms.
As noted earlier, contextually similar arms are preferred and, thus, query similarity is correlated to its utility.
Hence, instead of randomly selecting the candidate set, we argue that the pair of arms is chosen using a joint probability distribution defined over the similarity of the arms. 

Given two arms $a_i, a_j$ and a similarity function evaluation oracle $s_{j,i} = \simf(a_j,a_i)$, let $S_{\geq}^i(\varepsilon)=\{a_j : \simf(a_j,a_i)\geq \varepsilon\}$ and $S_{<}^i(\varepsilon)=\{a_j : \simf(a_j,a_i) < \varepsilon\}$ be the partition of the arms for a given similarity threshold value $\varepsilon>0$.
Based on this similarity indices, a distribution is induced on the elements of $S_{\geq}^i(\varepsilon)$ ($S_{<}^i(\varepsilon)$) as:

\begin{align*}
s_{j,i}(\varepsilon)=p(a_j | a_j \in S_{\geq}^i(\varepsilon)) &= {s_{j,i} \over \sum_k s_{k,i}\llbracket a_k \in S_{\geq}^i(\varepsilon)\rrbracket} \\
\bar{s}_{j,i}(\varepsilon)= p(a_j | a_j \in S_{<}^i(\varepsilon)) &= {s_{j,i} \over \sum_k s_{k,i}\llbracket a_k \in S_{<}^i(\varepsilon)\rrbracket},
\end{align*}
where $\llbracket \cdot\rrbracket$ is the indicator function. 
Extending over the whole population of arms, we define
\[
\pi_{\star,i}(\varepsilon)=p(S_{\geq}^i(\varepsilon))=
{\sum_j s_{j,i}\llbracket a_j \in S_{\geq}^i(\varepsilon) \rrbracket  \over 
\sum_j s_{j,i} } \; \text{ and } \; \bar{\pi}_{\star,i}(\varepsilon) = 1-\pi_{\star,i}(\varepsilon).
\]

Given the currently playing arm $a_{i}$, we posit that the candidate set of arms for adaptive hypothesis testing is sampled according to the following joint distribution; note, with $(s_{j,i},s_{k,i}) \geq \varepsilon$i, we denote the pairwise comparison: $s_{j,i} \geq \varepsilon \wedge s_{k,i} \geq \varepsilon$.

\begin{equation}
\begin{split}
p(a_j,a_k|a_i,\varepsilon) = & p(a_j,a_k | (s_{j,i},s_{k,i}) \geq \varepsilon \vee (s_{j,i},s_{k,i}) < \varepsilon) \\
            = &\frac{\pi^2_{\star,i}(\varepsilon) s_{j,i}(\varepsilon)s_{k,i}(\varepsilon)+\bar{\pi}^2_{\star,i}(\varepsilon)\bar{s}_{j,i}(\varepsilon) \bar{s}_{k,i}(\varepsilon)}{\pi^2_{\star,i}(\varepsilon)+\bar{\pi}^2_{\star,i}(\varepsilon)}.
\label{eq:joint_samp}
\end{split}
\end{equation}

\paragraph{\textbf{Derivation:}}
By assumption, the candidate set of arms $(a_j,a_k)$ for adaptive hypothesis testing are selected among those that satisfy 
$(a_m,a_n)\in S_{\geq}^i(\varepsilon) \times S_{\geq}^i(\varepsilon)$ or 
$(a_m,a_n)\in S_{<}^i(\varepsilon) \times S_{<}^i(\varepsilon)$. Therefore, 
the probability of selecting the pair $(a_j,a_k)$ is 
\begin{eqnarray}
p(a_j,a_k|a_i,\varepsilon)&=& 
p(a_j,a_k | {S_{\geq}^i(\varepsilon)}^2 \cup
{S_{<}^i(\varepsilon)}^2) 
= {p(a_j,a_k, {S_{\geq}^i(\varepsilon)}^2 \cup
{S_{<}^i(\varepsilon)}^2 ) \over
p({S_{\geq}^i(\varepsilon)}^2\cup {S_{<}^i(\varepsilon)}^2)} \nonumber \\
&=& {p(a_j,a_k,{S_{\geq}^i(\varepsilon)}^2) +
p(a_j,a_k,{S_{<}^i(\varepsilon)}^2 )  \over
p({S_{\geq}^i(\varepsilon)}^2) + p({S_{<}^i(\varepsilon)}^2) } \nonumber \\
&=& {p({S_{\geq}^i(\varepsilon)}^2) p(a_j,a_k |{S_{\geq}^i(\varepsilon)}^2) +
p({S_{<}^i(\varepsilon)}^2) p(a_j,a_k |{S_{<}^i(\varepsilon)}^2) \over 
p({S_{\geq}^i(\varepsilon)}^2) + p({S_{<}^i(\varepsilon)}^2) } \nonumber \\ 
&=& 
{\pi_{\star,i}(\varepsilon)^2 s_{j,i}(\varepsilon)s_{k,i}(\varepsilon) +
(1-\pi_{\star,i}(\varepsilon))^2 \bar{s}_{j,i}(\varepsilon)\bar{s}_{k,i}(\varepsilon) \over
\pi_{\star,i}(\varepsilon)^2 +(1-\pi_{\star,i}(\varepsilon))^2 },
\nonumber
\end{eqnarray}
where the second line follows from the fact that $S_{\geq}^i(\epsilon)$ and
$S_{<}^i(\epsilon)$ are mutually exclusive events, whereas the last line follows from the independence of arms. 

The rationale behind the above sampling scheme is that, given the currently playing arm $a_i$, we independently draw two arms $a_j$ and $a_k$ and accept/reject them only if pairs are similar/dissimilar to $a_i$ with confidence $\varepsilon$.
To understand the concept, a detailed worked-example is given in the supplementary file.


\paragraph{\textbf{An Illustrated Example}}
\begin{example}
Assume, we have four arms with contexts $a_0,a_1,a_2,a_3$.
Let us say, user started the information gathering session by playing the arm $a_0$.
The learning algorithm has to choose one of the arm from the remaining three arms (1,2,3) for recommendation.
Assume that the similarity values are  $\simf(a_1,a_0) = 0.4, \simf(_2,a_0) = 0.55, \simf_{a_3,a_0} = 0.6$.
Let us take $\varepsilon = 0.5$, now estimating the different quantities in the equation, we get

\begin{equation}
\begin{split}
\pi_{\star,0}(0.5) & = \frac{0.55 + 0.6}{0.4 + 0.55 + 0.6} = 0.742 \\
\pi_{\star,0}^2 &= 0.550 \\
\bar{\pi}_{\star,0} & = 0.258,\, \quad \bar{\pi}_{\star,0}^2 = 0.067 \\
s_{1,0}(0.5) &= \frac{0.4}{0.55 + 0.6} = 0.348 \\
\bar{s}_{1,0}(0.5) &= \frac{0.4}{0.4} = 1 \\
s_{2,0}(0.5) &= \frac{0.55}{0.55 + 0.6} = 0.478 \\
\bar{s}_{2,0}(0.5) &= \frac{0.55}{0.4} = 1.375 \\
s_{3,0}(0.5) &= \frac{0.6}{0.55 + 0.6} = 0.522 \\
\bar{s}_{3,0}(0.5) &= \frac{0.6}{0.4} = 1.5
\end{split}
\end{equation}
Now, estimating the joint probability for sampling the pairs, we get

\begin{equation}
\begin{split}
p(a_1,a_2|a_0,0.5) &= \frac{0.550 \times 0.348 \times 0.478 + 0.067 \times 1 \times 1.375}{0.550 + 0.067} = 0.297 \\
p(a_1,a_3|a_0,0.5) &= \frac{0.550 \times 0.348 \times 0.522 +  0.067 \times 1 \times 1.5}{0.550 + 0.067} = 0.324 \\
p(a_2,a_3|a_0,0.5) &= \frac{0.550 \times 0.478 \times 0.522 +  0.067 \times 1.375 \times 1.5}{0.550 + 0.067} = 0.445
\end{split}
\end{equation}

Now, let us try the same with $\varepsilon = 0.6$

\begin{equation}
\begin{split}
\pi_{\star,0}(0.6) &= \frac{0.6}{0.4 + 0.55 + 0.6} = 0.387 \\
\pi^2 &= 0.150 \\
\bar{\pi}_{\star,0} &= 0.613, \, \quad \bar{\pi}^2 = 0.376 \\
s_{1,0}(0.6) &= \frac{0.4}{0.6} = 0.667 \\
\bar{s}_{1,0}(0.6) &= \frac{0.4}{0.4+0.55} = 0.421 \\
s_{2,0}(0.6) &= \frac{0.55}{0.6} = 0.917 \\
\bar{s}_{2,0}(0.6) &= \frac{0.55}{0.4+0.55} = 0.579 \\
s_{3,0}(0.6) &= \frac{0.6}{0.6} = 1 \\
\bar{s}_{3,0}(0.6) &= \frac{0.6}{0.4+0.55} = 0.632
\end{split}
\end{equation}

Now, as before, let us estimate the joint probability, we get

\begin{equation}
\begin{split}
p(a_1,a_2|a_0,0.6) &= \frac{0.150 \times 0.667 \times 0.917 + 0.376 \times 0.421 \times 0.579}{0.150 +  0.376} = 0.349 \\
p(a_1,a_3|a_0,0.6) &= \frac{0.150 \times 0.667 \times 1 + 0.376 \times 0.421 \times 0.632}{0.150 +  0.376} = 0.380 \\
p(a_2,a_3|a_0,0.6) &= \frac{0.150 \times 0.917 \times 1 + 0.376 \times 0.579 \times 0.632}{0.150 +  0.376}  = 0.523
\end{split}
\end{equation}
\end{example}

Estimating the joint probability for every pair of arms is time consuming and computationally inefficient.
In Lemma~\ref{lemma:marginal}, we show that arms similar to the currently playing arm are selected with the probability in (\ref{eq:sel_prob}). This will help us to select a candidate set based on marginal probabilities only.

\begin{lemma}
\label{lemma:marginal}
Given the set of arms $\sA^\prime$ and the currently playing arm $a_i$, candidate arms are independently drawn with probability
\begin{equation}
\label{eq:sel_prob}
    P(a_j | a_i,\varepsilon) = \frac{\pi^2_{\star,i}(\varepsilon) s_{j,i}(\varepsilon) + \bar{\pi}^2_{\star,i} s_{j,i}(\varepsilon)}{\pi^2_{\star,i}(\varepsilon)+\bar{\pi}^2_{\star,i}(\varepsilon)}.
\end{equation}
\end{lemma}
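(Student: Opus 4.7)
The plan is to obtain $P(a_j \mid a_i,\varepsilon)$ as the one-coordinate marginal of the joint selection distribution $p(a_j,a_k \mid a_i,\varepsilon)$ already derived in equation~(\ref{eq:joint_samp}). Since the numerator of that joint is an additive mixture of two product-form terms — one over pairs in $S_{\geq}^i(\varepsilon)\times S_{\geq}^i(\varepsilon)$ and one over pairs in $S_{<}^i(\varepsilon)\times S_{<}^i(\varepsilon)$ — summing out $a_k$ should collapse each product to the factor that depends only on $a_j$, leaving the claimed formula.

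Concretely, I would start from
\[
P(a_j \mid a_i,\varepsilon) \;=\; \sum_{a_k \in \sA'} p(a_j,a_k \mid a_i,\varepsilon),
\]
substitute the right-hand side of (\ref{eq:joint_samp}), and pull the $a_j$-dependent factors $s_{j,i}(\varepsilon)$ and $\bar{s}_{j,i}(\varepsilon)$ outside the sum over $a_k$. The key step is then to recognise that $s_{\cdot,i}(\varepsilon)$ and $\bar{s}_{\cdot,i}(\varepsilon)$ are, by construction, genuine conditional probability mass functions on the partition blocks $S_{\geq}^i(\varepsilon)$ and $S_{<}^i(\varepsilon)$ respectively (this is immediate from the self-normalised similarity definitions given just above equation~(\ref{eq:joint_samp})). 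Hence, after extending each to be zero off its block, $\sum_k s_{k,i}(\varepsilon)=1$ and $\sum_k \bar{s}_{k,i}(\varepsilon)=1$, so the marginal reduces to
\[
P(a_j \mid a_i,\varepsilon) \;=\; \frac{\pi^2_{\star,i}(\varepsilon)\, s_{j,i}(\varepsilon) + \bar{\pi}^2_{\star,i}(\varepsilon)\, \bar{s}_{j,i}(\varepsilon)}{\pi^2_{\star,i}(\varepsilon)+\bar{\pi}^2_{\star,i}(\varepsilon)},
\]
which matches the stated (\ref{eq:sel_prob}) up to what appears to be a typographical occurrence of $s_{j,i}(\varepsilon)$ in place of $\bar{s}_{j,i}(\varepsilon)$ in the second numerator term.

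There is no genuine obstacle here — the computation is a direct marginalization — but the one subtle bookkeeping point I would justify explicitly is that the sum $\sum_{a_k \in \sA'}$ splits as $\sum_{a_k \in S_{\geq}^i(\varepsilon)}$ for the first mixture component and $\sum_{a_k \in S_{<}^i(\varepsilon)}$ for the second, because the product $s_{j,i}(\varepsilon)s_{k,i}(\varepsilon)$ vanishes whenever $a_k \notin S_{\geq}^i(\varepsilon)$ and likewise for the barred term. The independence of the two arm draws within each mixture component, which was already used to write (\ref{eq:joint_samp}), is what permits the factorisation that makes this marginalisation clean. The "independently drawn" phrasing in the lemma should then be understood as independence conditional on the mixture branch, with the marginal of each branch giving the expression in (\ref{eq:sel_prob}).
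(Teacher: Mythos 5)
Your proof is correct and follows essentially the same route as the paper's own proof: marginalize the joint distribution~(\ref{eq:joint_samp}) over $a_k$, pull the $j$-dependent factors $s_{j,i}(\varepsilon)$ and $\bar{s}_{j,i}(\varepsilon)$ out of the sum, and use that $s_{\cdot,i}(\varepsilon)$ and $\bar{s}_{\cdot,i}(\varepsilon)$ are normalized distributions on their respective blocks so each inner sum equals one. Your remark that the second numerator term of~(\ref{eq:sel_prob}) should read $\bar{s}_{j,i}(\varepsilon)$ rather than $s_{j,i}(\varepsilon)$ is also borne out by the paper's own derivation, which indeed arrives at $\bar{s}_{j,i}(\varepsilon)$, confirming the typo in the lemma statement.
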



\begin{proof}
From our assumption, arm pairs $\{(a_j,a_k)\}$ from $\sA^\prime$ are drawn $\sim P(a_j,a_k|a_i)$. In order to get the marginal probability, we sum over $a_k$,
\begin{equation*}
\begin{split}
P(a_j|a_i)=&\sum_{k \neq j} P(a_j,a_k|a_i) \\
    =&\frac{\pi^2_{\star,i}(\varepsilon) s_{j,i}(\varepsilon) \sum\limits_k s_{k,i}(\varepsilon) + \bar{\pi}^2_{\star,i}(\varepsilon) \bar{s}_{j,i}(\varepsilon) \sum\limits_k \bar{s}_{k,i}(\varepsilon)}{\pi^2_{\star,i}(\varepsilon)+\bar{\pi}^2_{\star,i}(\varepsilon)} \\
    =&\frac{\pi^2_{\star,i}(\varepsilon) s_{j,i}(\varepsilon) + \bar{\pi}^2_{\star,i}(\varepsilon) \bar{s}_{j,i}(\varepsilon)}{\pi^2_{\star,i}(\varepsilon)+\bar{\pi}^2_{\star,i}(\varepsilon)} \\
\end{split}
\end{equation*}
\end{proof}

A naive approach for generating candidate arms set will be to fix a probability threshold and take all the arms with highest marginal probability according to \eqref{eq:sel_prob}, or randomly sample $k$ arms with the estimated marginal probability.
However, such an approach is not effective in practice, i.e., users do not prefer to issue the most similar queries in data exploration tasks, as demonstrated in our experimental study in Section~\ref{sec:exp}.
Moreover, in the CMAB settings, there can be a huge number of similar arms for a fixed threshold. So randomly picking $k$ number of arms as per the marginal probability might lead the optimal or near-optimal arms to be ignored as discussed in our problem statement.
The number of arms for different threshold values of marginal probability is given in \autoref{tab:eps_stat}. 
To this end, we propose to select a candidate arms set that maximizes the \emph{utility} of the arms as given in Definition~\ref{def:good}.
Furthermore, we elaborate on an easy-to-implement distributed framework for finding a candidate set with varying numbers of candidate arms.

\subsection{Candidate Arm Selection Using Maximum Utility}
To select arms based on \emph{utility}, we need a function that realizes real valued utilities from the probabilistic preference scores obtained using the similarity between shared contextual information content.
According to Theorem \ref{thm:ortega} \citep{Ortega2010}, the \emph{logarithm} function is the only function that can express such a relationship between the preference probability scores and utility function. For completeness, we restate Theorem \ref{thm:ortega} by \citet{Ortega2010}.

\begin{theorem}[{\cite[Theorem 1]{Ortega2010}}]
\label{thm:ortega}
Given the arm set $\sA$ with the probability space defined as in \eqref{eq:sel_prob}, a function~$\util$ is a utility function on the probability space, if and only if for all $a_i,a_j \in \sA$, $\util\big(g\big(\{a_j\},\{a_i\}\big)\big) = c \cdot \log\big(g\big(\{a_j\},\{a_i\}\big)\big)$, where $c > 0$ is an arbitrary constant.
\end{theorem}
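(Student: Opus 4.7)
The plan is to prove both directions of the biconditional. For sufficiency (``if''), I would substitute $\util(p) = c \log p$ into Definition~\ref{def:good} and verify both axioms directly: sub-additivity holds in fact with equality, since $c \log(pq) = c \log p + c \log q$ whenever the joint preference factorises over independent arms; consistency follows because $\log$ is strictly increasing, and the choice $c > 0$ is exactly what preserves the direction of the equivalence in part (ii).

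The substantive work is the necessity (``only if'') direction. First I would extract a functional equation from the axioms. Given the currently playing arm $a_i$, two candidate arms $a_j, a_k$ are drawn independently under the sampling scheme of~(\ref{eq:joint_samp}), so the joint preference factorises as $g(\{a_j, a_k\}, \{a_i\}) = g(\{a_j\}, \{a_i\}) \cdot g(\{a_k\}, \{a_i\})$. Writing $p = g(\{a_j\}, \{a_i\})$ and $q = g(\{a_k\}, \{a_i\})$ and setting $f = \util$, the sub-additivity axiom becomes the functional inequality
\[
f(pq) \;\leq\; f(p) + f(q), \qquad p, q \in (0, 1].
\]
Per the footnote attached to Definition~\ref{def:good}, the original theorem of Ortega is stated for the additive variant $f(pq) = f(p) + f(q)$, which is the classical Cauchy logarithmic equation.

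Once reduced to Cauchy's equation, the standard route is the substitution $p = e^{-x}$, $q = e^{-y}$, which transforms $f(pq) = f(p) + f(q)$ on $(0, 1]$ into the additive Cauchy equation $\tilde{f}(x+y) = \tilde{f}(x) + \tilde{f}(y)$ on $[0, \infty)$, where $\tilde{f}(x) = f(e^{-x})$. The consistency axiom (ii) makes $f$ strictly monotone on $(0, 1]$, hence $\tilde{f}$ is strictly monotone on $[0, \infty)$, and this monotonicity is enough to rule out the pathological non-measurable solutions of Cauchy's equation. It follows that $\tilde{f}(x) = -c x$ for some constant, i.e.\ $f(p) = c \log p$. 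The sign of $c$ is forced by consistency: since $p > q$ implies $\util(p) > \util(q)$ and $\log$ is itself increasing, we must have $c > 0$.

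The main obstacle is the gap between the sub-additive inequality that appears in Definition~\ref{def:good} and the additive equality needed for Cauchy's equation. Taking sub-additivity at face value admits extra solutions (for example $p \mapsto c \log p - \alpha$ with $\alpha > 0$), so one needs either the sharper additive axiom from the original Ortega formulation, invoked via the footnote, or an extra symmetry/normalisation argument (e.g.\ applying sub-additivity to $f((pq)r) = f(p(qr))$ in two ways and using strict monotonicity to force equality, or normalising $\util(1) = 0$ from the trivial identity $g(\{a_i\}, \{a_i\}) = 1$). I would resolve this by first adopting the additive version as in the referenced theorem and then applying the Cauchy-equation argument above, keeping the sub-additive relaxation as a remark once the characterisation is in hand.
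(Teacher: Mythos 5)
The paper never proves Theorem~\ref{thm:ortega}: it is imported verbatim from Ortega (2010) ``for completeness,'' so there is no in-paper argument to compare against and your attempt stands or falls on its own. Your skeleton --- sufficiency by substitution, necessity by reduction to Cauchy's logarithmic functional equation and then using the monotonicity supplied by consistency to exclude pathological solutions --- is indeed the standard proof of the cited result, and your diagnosis that the sub-additive axiom of Definition~\ref{def:good} cannot yield necessity is correct and worth flagging (it is a real defect of the paper's formulation). One small slip there: $f(p)=c\log p-\alpha$ with $\alpha>0$ is \emph{super}-additive, since $f(pq)=f(p)+f(q)+\alpha>f(p)+f(q)$; the solutions that sub-additivity fails to exclude are $f(p)=c\log p+\alpha$ with $\alpha>0$.

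The genuine gap is the factorization on which both directions of your argument rest: the claim $g\big(\{a_j,a_k\},\{a_i\}\big)=g\big(\{a_j\},\{a_i\}\big)\cdot g\big(\{a_k\},\{a_i\}\big)$ is false for the probability space the theorem actually refers to. The joint \eqref{eq:joint_samp} is a \emph{mixture} of two product distributions: the two arms are independent only conditionally on the similarity regime ($S_{\geq}^i(\varepsilon)$ versus $S_{<}^i(\varepsilon)$), and marginalizing over the regime destroys independence. Explicitly, suppressing the $\varepsilon$ arguments,
\[
p(a_j,a_k|a_i,\varepsilon)-P(a_j|a_i,\varepsilon)\,P(a_k|a_i,\varepsilon)
=\frac{\pi_{\star,i}^2\,\bar{\pi}_{\star,i}^2\,\big(s_{j,i}-\bar{s}_{j,i}\big)\big(s_{k,i}-\bar{s}_{k,i}\big)}{\big(\pi_{\star,i}^2+\bar{\pi}_{\star,i}^2\big)^2},
\]
which is nonzero in general; in the paper's own worked example, $p(a_1,a_2|a_0,0.5)=0.297$ while the product of the marginals from \eqref{eq:sel_prob} is about $0.419\times 0.575\approx 0.24$. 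Consequently the functional equation $f(pq)=f(p)+f(q)$ never materializes from axiom (i), which instead relates $f$ evaluated at the mixture joint to $f(p)+f(q)$, so your Cauchy reduction has nothing to bite on; worse, your sufficiency step also fails as written, because in that same example $\log p(a_1,a_2|a_0)>\log P(a_1|a_0)+\log P(a_2|a_0)$, i.e.\ $c\log$ itself violates sub-additivity on this space. To close the gap you must either prove the theorem in Ortega's original abstract setting --- where additivity is postulated for genuinely independent events and the set of achievable probabilities is rich enough (closed under products, ideally an interval) for Cauchy's argument to apply --- and note explicitly that the paper's transplantation of the result onto the space \eqref{eq:joint_samp}--\eqref{eq:sel_prob} is loose, or else restate the axioms over pairs within a single similarity regime, where conditional independence does give the factorization you need.
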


It can be easily verified that the $\log$~function satisfies all the properties given in the Definition~\ref{def:good} with our preference probability space defined in \eqref{eq:sel_prob}.
Given $\log$ as the utility function, $c=1$ and currently executing arm $a_i$, a candidate set $\sC$ of cardinality~$k$ realizes the optimal utility if it solves\footnote{For the set $\sC$, $g(\sC,a_i)$ is defined as the joint probability of the arms in $\sC$ conditioned on $a_i$}:

\begin{equation}
\max_{\substack{\sC \subseteq \sA^\prime \\ \lvert\sC\lvert \leq k}} \log\big(g\big(\sC,\{a_i\}\big)\big)
\label{eq:opt_set}
\enspace
\end{equation}

Since $\log$ is a concave function, the objective function in \eqref{eq:opt_set} is a submodular maximization problem with cardinality matroid constraint.
Though submodular maximization is NP-hard in general, a simple greedy heuristic due to \cite{nemhauser1978analysis} guarantees a solution with constant approximation factor equal to $1-\frac{1}{e}$. Our CMAB algorithm with max-utility based candidate arm selection procedure is given in Algorithm~\ref{alg:MCMCB}.
The input to the algorithm is $\sA^\prime$ and the currently running query (arm) $a_i$.
The algorithm starts with an initial set of candidate arms with fixed cardinality.
This set is constructed following the greedy heuristic.
At each trial, the algorithm dynamically updates the candidate set based on the current trial count using Algorithm~\ref{alg:select-strategy}.
At the initial stages of trials, we allow the candidate set to grow as the algorithm is allowed to sacrifice short term gain by exploring for arms with even larger reward that will expect to yield a long-term benefit. As the trial progresses we keep the candidate set fixed as the algorithm is allowed to eschew new arm exploration, by exploiting the unimodality of $t^\alpha{\rm e}^{-t}$.
Finally, we run the standard stochastic contextual MAB algorithm using the candidate set and currently executing arm.

\begin{figure}
    \begin{minipage}[t]{0.5\textwidth}
        \vspace{0pt}
        \begin{algorithm2e}[H]
            \SetAlgoLined
            \LinesNumbered
            \caption{Max-utility based Countably Many-armed Contextual Bandits}
            \label{alg:MCMCB}
            \SetKwInOut{Input}{Input}\SetKwInOut{Output}{Output}
            \Input {$\sA^\prime,a_i$}
            Initialize $\sC$ by executing Algorithm~\ref{alg:fast-sub}
            \For {$t = 1, \ldots, T$} {
                run Algorithm~\ref{alg:select-strategy} to update $\sC$\;
                run standard stochastic contextual bandit algorithm on arm set $\sC$ using $a_i$ as context\label{line:band_alg}\;
            }
        \end{algorithm2e}
    \end{minipage}\hfill
    \begin{minipage}[t]{0.49\textwidth}
        \vspace{0pt}
        \begin{algorithm2e}[H]
            \SetAlgoLined
            \LinesNumbered
            \caption{Candidate Selection}
            \label{alg:select-strategy}
            \SetKwInOut{Input}{Input}\SetKwInOut{Output}{Output}
            \Input {$\sA^\prime, \sC, t, \alpha \geq 1$}
            $k_t = t^\alpha{\rm e}^{-t}$ \label{line:kn}\;
            $\sS = \sA^\prime \setminus \sC $ \;
            \While { $\lvert \sC \rvert < k_t$ } {
                $e^\star = \argmax_{e \in \sS} \log\big(g\big(\sC \cup \{e\},\{a_i\}\big)\big)$\;
                $\sC = \sC \cup \{e^\star\}$\;
                $\sS = \sS \setminus \{e^\star\}$\;
            }
        \Output {$\sC$}
        \end{algorithm2e}
    \end{minipage}
\end{figure}

\subsection{Fast Submodular Function Evaluation}
The standard greedy algorithms for submodular maximization guarantee a near-optimal candidate selection, without room for further improvement using current computing environments.
However, greedy algorithms do not scale well when applied to massive data and, thus, initialization of $\sC$ will incur considerable computation time.
The greedy algorithms work well for centralized submodular maximization problems; but this requires $O(nk)$ value oracle calls to select $k$ arms from $n$ arms.
The adaptive addition of new arms to the existing $\sC$ (Algorithm~\ref{alg:select-strategy}) can be achieved in linear time, as the number of arms to be added will be relatively very small.
For large data, submodular maximization with cardinality constraint can be solved in nearly linear time using a distributed greedy algorithm \citep{mirzasoleiman2016distributed}.
We use a faster version of the submodular maximization algorithm \citep{mirzasoleiman2016distributed}, which can be parallelized. The proposed algorithm adopted to our setting is then given in Algorithm \ref{alg:fast-sub}.
In \autoref{line:log_set1} \& \autoref{line:log_set2} of the Algorithm~\ref{alg:fast-sub}, as mentioned earlier, $g$ is the joint probability of all the arms in the input set conditioned on the currently playing arm. 
Due to the use of a greedy algorithm, we do not explicitly calculate it, but incrementally construct the set.

\begin{algorithm2e}[t]
    \SetAlgoLined
    \LinesNumbered
    \caption{Distributed Submodular Maximization}
    \label{alg:fast-sub}
    \SetKwInOut{Input}{Input}\SetKwInOut{Output}{Output}
    \Input {$\sA^\prime, k, m, a_i$}
    Partition $\sA^\prime$ into $m$ sets $\sA_1,\sA_2,\cdots \sA_m$ (arbitrarily/at random)\;
    Run standard (lazy) greedy algorithm on each $\sA_i$ to obtain solution with cardinality $k$ that maximizes $\log\big(g\big(\sA_i,\{a_i\}\big)\big)$ to get set $\sP_i$ \label{line:log_set1}\;
    Find $\sP_{max} = \argmax_{\sP_{i|1\dots m}}\log\big(g\big(\sP_i,\{a_i\}\big)\big)$ \label{line:log_set2}\;
    Merge the sets into $\sB = \bigcup\limits_{i=1}^m \sP_i$\;
    Run lazy greedy algorithm on $\sB$ to obtain solution that maximizes $\util$ to get $\sP_{\sB}$\;
    \Output {$\argmax_{e \in \{\sP_{max},\sP_{\sB}\}} \log\big(g\big(e,\{a_i\}\big)\big)$}
\end{algorithm2e}
\subsection{Choosing $k$ and Anytime Algorithm}
If the total number of trials $T$ is known beforehand, one can choose a value of $k$ that minimizes the regret associated with the underlying bandit algorithm.
For instance, in the UCB-$\infty$ algorithm in \cite{wang2008algorithms}, for a distribution specific parameter $\beta$, one can choose $k$ to be of the order of $T^\frac{\beta}{2}$ or $T^\frac{\beta}{\beta+2}$ depending on the range of $\beta$\footnote{Better approximation for $k$ can be achieved by adopting the Lambert $W$ function.}.
When the number of trials $T$ is not known in advance, one can adaptively choose $k$ depending on the current trial number.
A MAB algorithm is \emph{anytime}, if the regret bounds on the expected regret holds for all values of $T$ (up to constant factors).
We can make our algorithm \emph{anytime} by employing \emph{anytime} bandit algorithm like UCB-$\infty$ in \autoref{line:band_alg} of Algorithm~\ref{alg:MCMCB}.
For example, in case of using UCB-$\infty$, we can use $k_{t-1} < t^\frac{\beta}{2}$ or $k_{t-1} < t^\frac{\beta}{\beta+1}$ (depending on the value of $\beta$) as shown in \autoref{line:kn} of Algorithm \ref{alg:select-strategy}.

\section{Experimental Evaluation \& Comparative Assessment}
\label{sec:exp}
We discuss the dataset, baselines, and provide in-depth analysis to verify the performance of the proposed arm selection scheme.
Our source code is available at: ~\url{https://anonymous.4open.science/r/0e5ad6b7-ac02-4577-9212-c9d505d3dbdb/}.
\subsection{Dataset \& Context Vector}
Our experiments are conducted on recent large scale query logs from an online literature discovery service.
The application works by accepting a user query and returning a list of the most relevant research articles to the query.
The logs contained around 4.5 million queries grouped into 547740 user sessions.
For our experimental study, as a pre-processing step, we removed user sessions with less than 4 queries and more than 50 queries.
We also removed sessions containing non-English queries. 
The statistics of the final data used in the experiment are given in \autoref{tab:data_stat}.

\paragraph{\textbf{Details Of Queries:}}
The log file contains the user issued search queries, in the form of free text, to a popular online literature discover service.
For each user session, the log file contains the unique session id, timestamp and the query text.
The query text covers diverse topics like 'knowledge graph embeddings', 'cryptocurrencies', 'COVID 19' etc.
It also contains non-science topics like politics, behavioural studies etc.

Below, we give three examples of the queries issued during the literature discovery service, with session ids anonymized. 
In \autoref{tab:query_type}, we list queries issued in three different sessions.
Though the queries can be broadly classified under very specific topics, it covers diverse aspects within the topic.
For example, the queries in the first session are related to detecting coding errors, and within the session the user explores different aspects of the topic by exploring implementation details, details about specific tools and details about specific errors.
\begin{table}[h]
\centering
\caption{Example Queries}
\label{tab:query_type}
\begin{tabular}{@{}llp{10cm}@{}}
    \toprule
    Session ID   &     Timestamp              &        Query Text  \\
    \midrule
XXXX495      &  2020-X-X X:04:03  & protocol state fuzzing of tls implementations \\
XXXX495      &  2020-X-X X:08:45  & aflnet a greybox fuzzer for network protocols \\
XXXX495      &  2020-X-X X:33:04  & protocol learning fuzzing \\
XXXX495      &  2020-X-X X:42:33  & improving grey box fuzzing by modeling program behavior \\
XXXX495      &  2020-X-X X:03:22  & poster fuzzing iot firmware via multi stage message generation \\
XXXX495      &  2020-X-X X:05:34  & fuzzguard filtering out unreachable inputs in directed grey box fuzzing through deep learning \\
XXXX495      &  2020-X-X X:13:17  & a functional method for assessing protocol implementation security \\
XXXX495      &  2020-X-X X:19:17  & not all bytes are equal neural byte sieve for fuzzing \\
XXXX229	     &  2020-X-X X:17:21  & 	modeling relational data with graph convolutional networks  \\
XXXX229	     &  2020-X-X X:19:00  & 	representing text for joint embedding of text and knowledge bases  \\
XXXX229	     &  2020-X-X X:19:25  & 	convolutional 2d knowledge graph embeddings  \\
XXXX229	     &  2020-X-X X:21:42  & 	deeppath a reinforcement learning method for knowledge graph reasoning  \\
XXXX078	     &  2020-X-X X:47:55  & 	bitcoin transfer system  \\
XXXX078	     &  2020-X-X X:49:38  & 	cryptocurrency transfer system  \\
XXXX078	     &  2020-X-X X:49:55  & 	cryptocurrency transaction system  \\
XXXX078	     &  2020-X-X X:06:49  & 	cryptocurrency transaction analysis from a network perspective  \\
XXXX078	     &  2020-X-X X:20:19  & 	cryptocurrency trackability  \\
    \bottomrule
\end{tabular}
\end{table}

For the query context vector, we used a transformer-based deep neural network \citep{vaswani2017attention} to extract the contexts from the queries.
We train a bidirectional BERT model \citep{devlin2019bert} on queries to get a word level vector embeddings.
We feed these embeddings to a sentence embedding algorithm \citep{reimers2019sentence} to get the final context vector representation for each query in the log.
We set the dimension of the context vectors $d=128$.
\begin{table}[h]
    \begin{minipage}{0.5\textwidth}
        \centering
        \caption{\# arms for different $p(a_j|a_i)$}
        \label{tab:eps_stat}
        \begin{tabular}{@{}cc@{}}
            \toprule
            $p(a_j|a_i,0.5)$ & COUNT \\
            \midrule
            0.25 & 1,116,801 \\
            0.40 & 949,365 \\
            0.50 & 523,750 \\
            0.60 & 161,529 \\
            \bottomrule
        \end{tabular}
    \end{minipage}\hfill
    \begin{minipage}{0.5\textwidth}
        \centering
        \caption{Dataset Statistics}
        \label{tab:data_stat}
        \begin{tabular}{@{}ll@{}}
            \toprule
            Description & COUNT \\
            \midrule
            \# of queries (original) &  4,687,947\\
            \# queries (pre-processed) & 1,120,461\\
            \# user sessions (\texttt{"})& 159,237 \\
            \# avg queries/session (\texttt{"}) & 7 \\
            \bottomrule
        \end{tabular}
    \end{minipage}
\end{table}

\subsection{Models Under Comparison}
We compare our max-utility selection strategy against a variant of the zooming algorithm \citep{kleinberg_2019} and the random selection strategy that is used in standard CMAB algorithms \citep{wang2008algorithms,KalvitZ20,yinglun2020,bayati_2020}.
In random selection, we randomly select $k$ arms and pass to the downstream MAB algorithm whereas in max-utility we follow the Algorithm~\ref{alg:MCMCB}.
In case of random and max-utility schemes, we experimented with four different contextual bandit algorithms (LinUCB, LinThompSamp, Random and Similar) as downstream MAB algorithms and for the zooming variant, we used LinUCB as the downstream MAB algorithm.

\smallskip\noindent\textbf{Linear UCB (LinUCB)} was proposed in the context of news recommendation \citep{li2010contextual} where the algorithm sequentially selects news articles based on contextual information about the users and articles, while simultaneously adapting to the user-click feedback. LinUCB models the reward as a linear function of the context vector.

\smallskip\noindent\textbf{Linear Thomson Sampling(LinThompSamp)}was proposed as an extension to the Thomson sampling scheme to the stochastic contextual bandit settings \citep{agrawal2013thompson}. The rewards are assumed to be a linear function of the context vectors. The algorithm itself is based on Bayesian ideas and assumes that the reward likelihood and mean parameter follow the Gaussian distribution.

\smallskip\noindent\textbf{Most Similar} strategy recommends one query from the top-five queries with highest conditional marginal probability with respect to the currently executing query from the candidate set. 
In \autoref{fig:query_plot}, we plot the context vectors associated with queries issued in five different user sessions in five different colors.
Each data point represents a query.
The queries in each session are contextually similar and form relatively loose clusters.
Looking at the figure, one might expect that recommending the most similar query might be a useful strategy. We used this baseline strategy to demonstrate that the naive strategy of picking the queries with high marginal probability or similarity with respect to the currently executing query underperform compared to other algorithms.

\smallskip\noindent\textbf{Random} strategy randomly selects a query for recommendation from the candidate set.

\smallskip\noindent\textbf{Zooming LinUCB} is based on the zooming algorithm proposed by \cite{kleinberg_2019}. It combines the upper confidence bound technique with an adaptive refinement step that selects a candidate set region.
For each currently executing query, we select candidate arms that is equal or higher than the similarity threshold and run LinUCB on this arm set.

\subsection{Results \& Analysis}
\begin{figure*}[t!]
    \begin{minipage}{0.49\textwidth}
        \centering
        \begin{tabular}{@{}c@{}}
            \includegraphics[width=\textwidth]{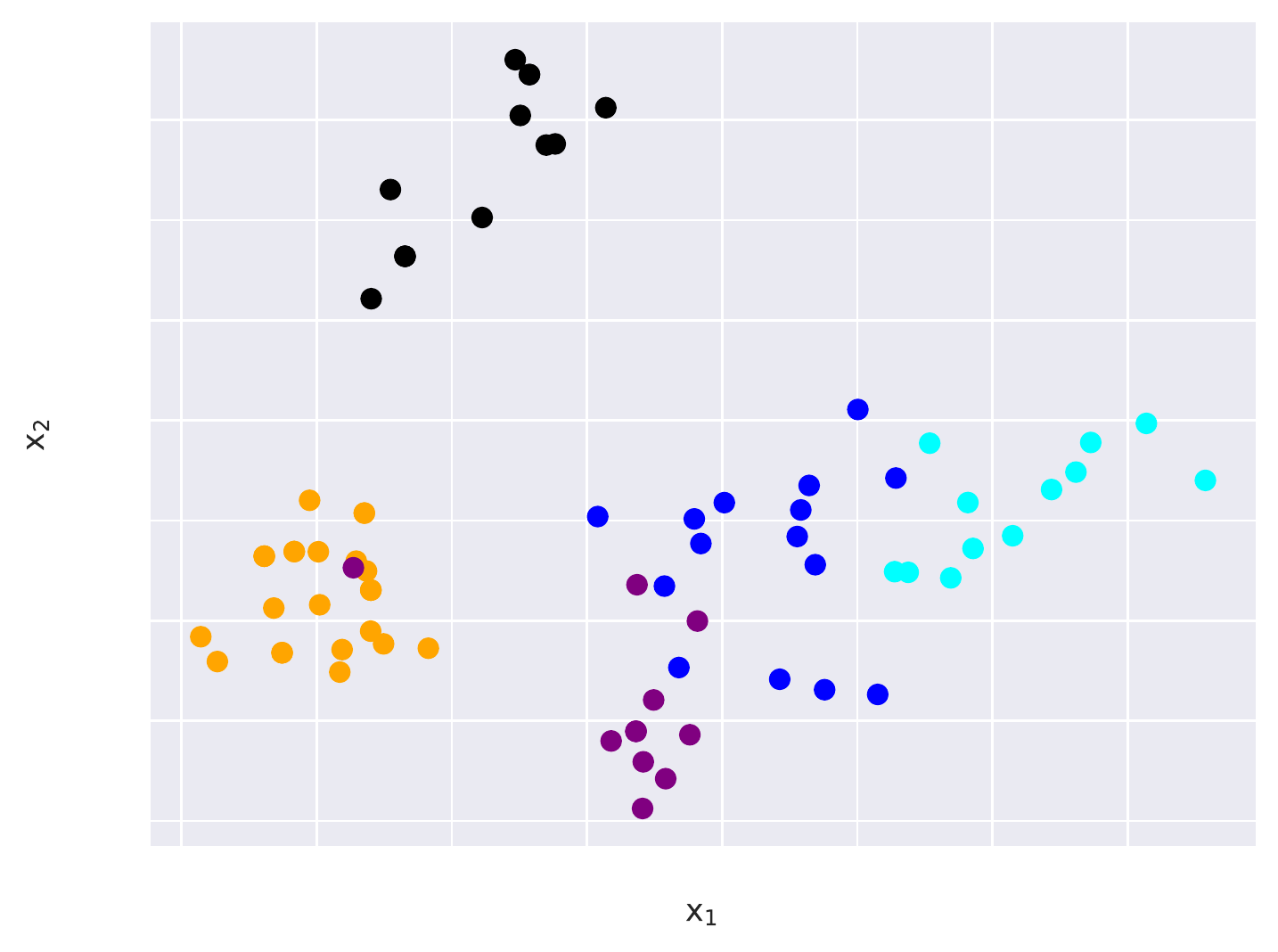}
        \end{tabular}
        \caption{Similarity of the queries in five user sessions in five different colors.}
        \label{fig:query_plot}
    \end{minipage}\hfill
    \begin{minipage}{0.5\textwidth}
        \centering
        \begin{tabular}{@{}c@{}}
            \includegraphics[trim={0 0 0 3em}, clip, width=1.08\textwidth]{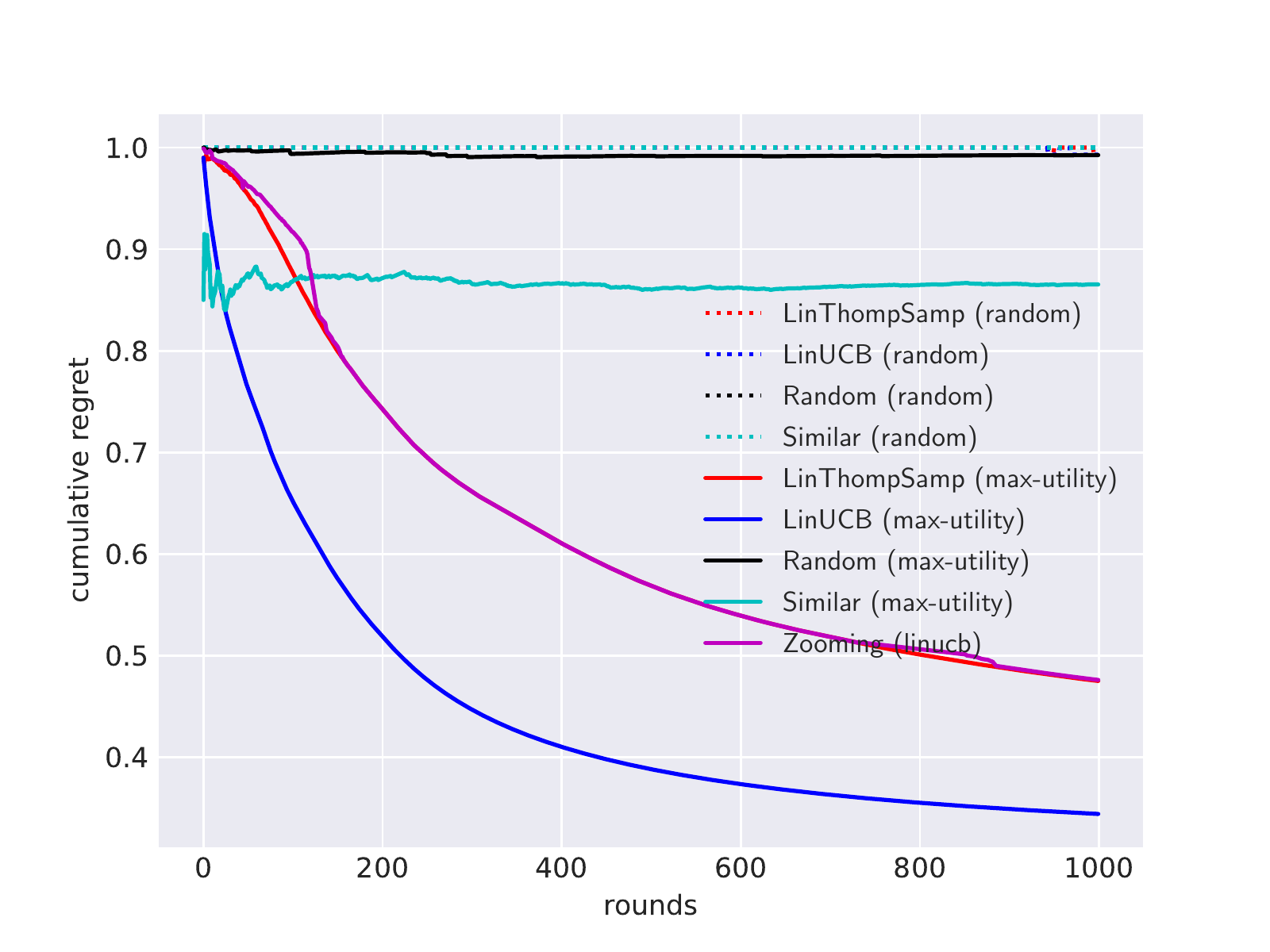}
        \end{tabular}
        \caption{Per-round regret for the different algorithms.}
        \label{fig:round_regret}
    \end{minipage}
\end{figure*}

We used the cumulative regret to compare the performance of the algorithms.
Formally, we compare the quantity \[ R(T) = T - \sum_{t=1}^T r(a^t), \] where $r(a^t)$ is the reward obtained from the arm played at the $t^{th}$ round.
In the experiments, we used the first query in each user session as the initial query and $\sA^\prime$ is given as the input to the candidate set selection algorithm.
We used cosine as the similarity function between arms.
If the recommended query is the one of the next queries executed in the corresponding user session, the algorithm is rewarded with $r(a) = 1$, otherwise, $r(a) = 0$.
The per-round regret of different MAB algorithms against the random and max-utility selection strategies is shown in \autoref{fig:round_regret}.
At round $t$, the per-round regret is defined as $\frac{R(t)}{t}$.
As it can be seen from the plot, random selection strategy results in optimal or near-optimal arms to be ignored whereas max-utility based candidate selection always includes optimal or near-optimal arms in the candidate set.
Though the zooming algorithm with LinUCB performed as well as LinThompSamp with max-utility, it resulted in higher regret compared to LinUCB with max-utility.
Our results are also inline with the regret guarantees proved for LinUCB and LinThompSamp.
In the case of LinUCB, the per-round regret is of the order of $\frac{d}{\sqrt{T}}$ whereas in the case of LinThompSamp it is of the order of $\frac{d^2}{\sqrt{T}}$.
Thus LinUCB resulted in lower regret compared to LinThompSamp with max-utility. 
Even with superior regret guarantees of the LinUCB algorithm, the zooming algorithm failed to achieve the regret of LinUCB with max-utility. 
Looking at the plot in \autoref{fig:query_plot}, one might be tempted to use simple strategies like recommending similar queries (our Similar strategy), but from the performance comparison in \autoref{fig:round_regret} it is very evident that even with the proper candidate selection strategy, it performs very badly in the longer runs.

\begin{figure*}[t!]
    \begin{minipage}{0.5\textwidth}
        \centering
        \begin{tabular}{@{}c@{}}
            \includegraphics[width=\textwidth]{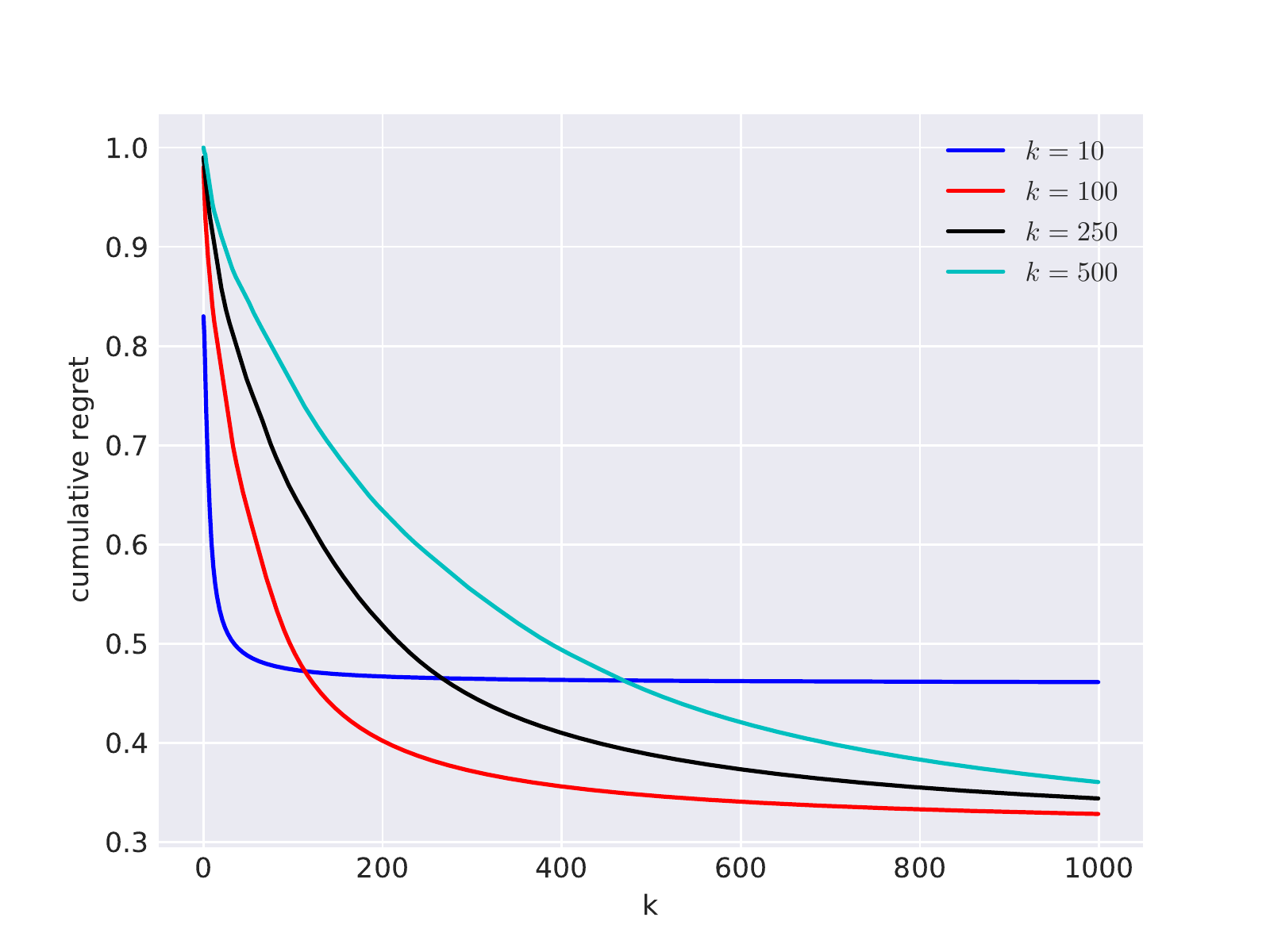}   
        \end{tabular}
        \caption{Per-round regret for different $k$.}
        \label{fig:arm_regret}
    \end{minipage}\hfill
    \begin{minipage}{0.5\textwidth}
        \centering
        \begin{tabular}{@{}c@{}}
            \includegraphics[width=\textwidth]{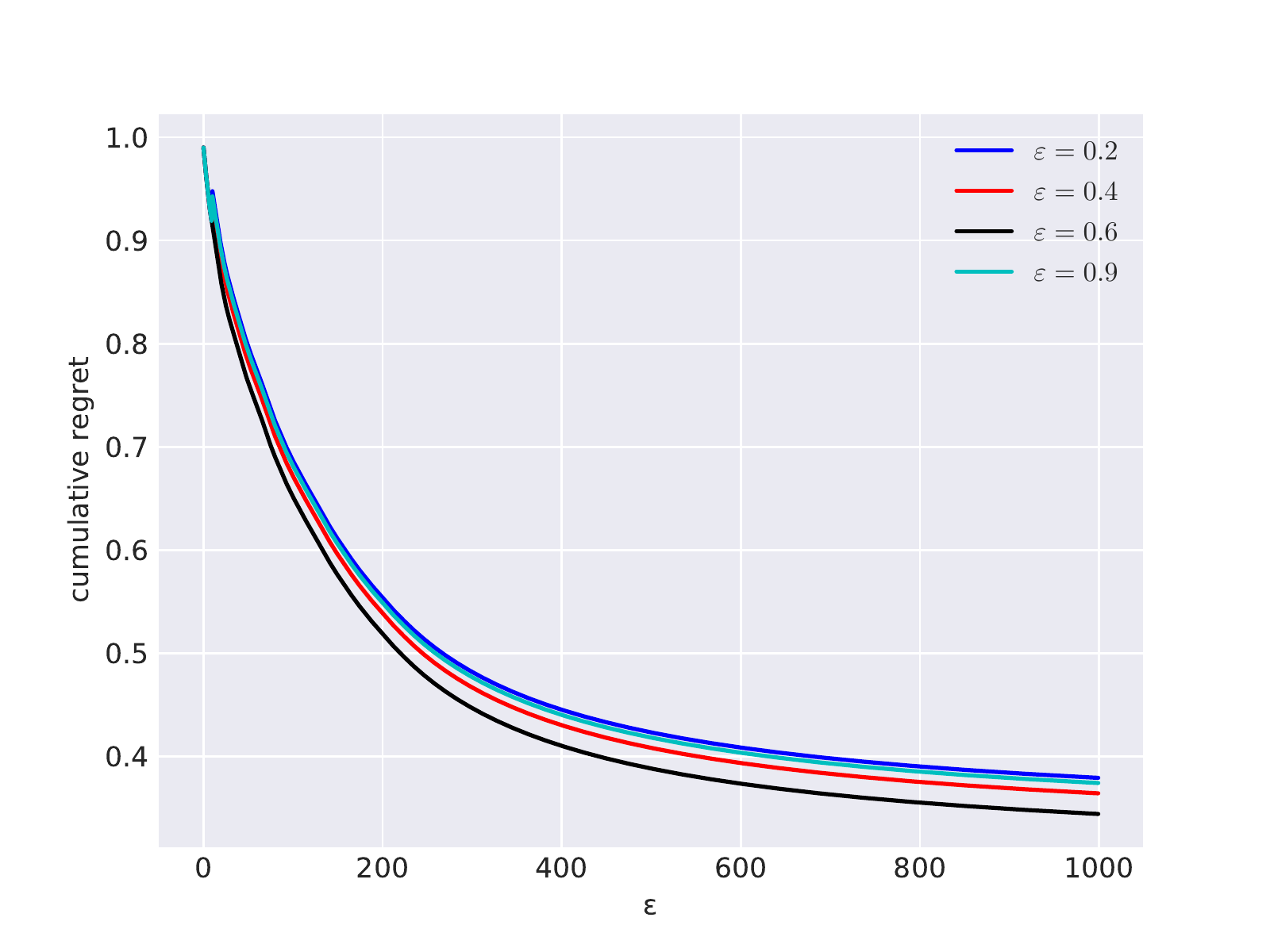}
        \end{tabular}
        \caption{Per-round regret for different $\varepsilon$.}
        \label{fig:eps_regret}
    \end{minipage}
\end{figure*}
Another interesting observation is that with the max-utility arm selection strategy, even the trivial random recommendation strategy performs better than random recommendation with random arm selection strategy.
Moreover, random recommendation with max-utility arm selection strategy results in lower regret than recommending similar queries from a randomly picked candidate set.
\paragraph{\textbf{Hyperparameter Analysis.}}
We further investigated the effect of the size of the candidate set and the similarity threshold ($\varepsilon$) on the algorithm performance.
For this set of experiments, we used LinUCB as the downstream MAB algorithm.
Our results are shown in \autoref{fig:arm_regret} \& \autoref{fig:eps_regret}.
In \autoref{fig:arm_regret}, we varied the value of $k$ from 10 to 500 and compared the performance as a function of~$k$. Here, we used $\varepsilon = 0.5$.
Keeping the cardinality of the candidate set to small or high does not give any performance advantage. Though the average number of queries per session in our dataset is $\sim$7, setting $k=10$ performed very poorly. So it is very important to choose the correct size for the candidate set. Precisely, $k$ should be chosen large enough such that the candidate set contains all possible, relevant and diverse queries with respect to the currently running query.
We also analyzed the performance of the max-utility candidate selection strategy for different values of $\varepsilon$.
Probability preference score for an arm is determined by the value of $\varepsilon$, thus, it is an hyperparameter of the selection strategy.
The performance of the LinUCB algorithm for different values of $\varepsilon$ is plotted in \autoref{fig:eps_regret}. Here, we used $k=250$.
For small and very large values of $\varepsilon$, the per-round regret is slightly higher than mid-range (0.4 - 0.6) values of $\varepsilon$.
By keeping $\varepsilon$ to small and large values, we make the preference probability score between the currently running query and the remaining queries to be high and low respectively.
As a result, we notice the same trend as in \autoref{fig:arm_regret}.
When $\varepsilon$ is small many irrelevant queries will have high preference probability scores.
Similarly, when $\varepsilon$ is large, many diverse but relevant queries will have low preference probability scores.

\section{Conclusions}
\label{sec:conc}
We modelled the query recommendation problem in closed loop interactive learning settings like online information gathering using a MAB framework with countably many arms.
The standard way to solve MAB problems with countably many arms is to select a small set of candidate arms and then apply standard MAB algorithms on this candidate set downstream.
We showed that such a selection strategy often results in higher cumulative regret and proposed a selection strategy based on the maximum utility of the arms.
Our experimental results using a real online literature gathering service log file demonstrated that the proposed arm selection strategy significantly improves the cumulative regret compared to zooming algorithm and the commonly used random selection strategy for a variety of contextual multi-armed bandit algorithms.


%

\bibliography{review}

\begin{thebibliography}{32}
\providecommand{\natexlab}[1]{#1}
\providecommand{\url}[1]{\texttt{#1}}
\expandafter\ifx\csname urlstyle\endcsname\relax
  \providecommand{\doi}[1]{doi: #1}\else
  \providecommand{\doi}{doi: \begingroup \urlstyle{rm}\Url}\fi

\bibitem[Agrawal and Goyal(2013)]{agrawal2013thompson}
S.~Agrawal and N.~Goyal.
\newblock Thompson sampling for contextual bandits with linear payoffs.
\newblock In \emph{ICML}, pages 127--135. PMLR, 2013.

\bibitem[Auer et~al.(2002)Auer, Cesa-Bianchi, and Fischer]{auer2002finite}
P.~Auer, N.~Cesa-Bianchi, and P.~Fischer.
\newblock Finite-time analysis of the multiarmed bandit problem.
\newblock \emph{Machine learning}, \penalty0 (2):\penalty0 235--256, 2002.

\bibitem[Baeza-Yates et~al.(2004)Baeza-Yates, Hurtado, and
  Mendoza]{baeza2004query}
R.~Baeza-Yates, C.~Hurtado, and M.~Mendoza.
\newblock Query recommendation using query logs in search engines.
\newblock In \emph{EDBT}, pages 588--596. Springer, 2004.

\bibitem[Bayati et~al.(2020)Bayati, Hamidi, Johari, and Khosravi]{bayati_2020}
M.~Bayati, N.~Hamidi, R.~Johari, and K.~Khosravi.
\newblock Unreasonable effectiveness of greedy algorithms in multi-armed bandit
  with many arms.
\newblock In \emph{NeurIPS}, pages 1713--1723, 2020.

\bibitem[Berry et~al.(1997)Berry, Chen, Zame, Heath, and
  Shepp]{berry1997bandit}
D.~A. Berry, R.~W. Chen, A.~Zame, D.~C. Heath, and L.~A. Shepp.
\newblock Bandit problems with infinitely many arms.
\newblock \emph{The Annals of Statistics}, pages 2103--2116, 1997.

\bibitem[Cesa-Bianchi and Lugosi(2006)]{cesa2006prediction}
N.~Cesa-Bianchi and G.~Lugosi.
\newblock \emph{Prediction, learning, and games}.
\newblock Cambridge university press, 2006.

\bibitem[Dehghani et~al.(2017)Dehghani, Rothe, Alfonseca, and
  Fleury]{dehghani2017learning}
M.~Dehghani, S.~Rothe, E.~Alfonseca, and P.~Fleury.
\newblock Learning to attend, copy, and generate for session-based query
  suggestion.
\newblock In \emph{CIKM}, pages 1747--1756, 2017.

\bibitem[Devlin et~al.(2019)Devlin, Chang, Lee, and Toutanova]{devlin2019bert}
J.~Devlin, M.-W. Chang, K.~Lee, and K.~Toutanova.
\newblock {BERT}: Pre-training of deep bidirectional transformers for language
  understanding.
\newblock In \emph{NACL}, pages 4171--4186, 2019.

\bibitem[Dimitriadou et~al.(2014)Dimitriadou, Papaemmanouil, and
  Diao]{dimitriadou2014explore}
K.~Dimitriadou, O.~Papaemmanouil, and Y.~Diao.
\newblock Explore-by-example: An automatic query steering framework for
  interactive data exploration.
\newblock In \emph{SIGMOD}, pages 517--528, 2014.

\bibitem[He et~al.(2009)He, Jiang, Liao, Hoi, Chang, Lim, and Li]{he2009web}
Q.~He, D.~Jiang, Z.~Liao, S.~C. Hoi, K.~Chang, E.-P. Lim, and H.~Li.
\newblock Web query recommendation via sequential query prediction.
\newblock In \emph{IEEE ICDE}, pages 1443--1454, 2009.

\bibitem[Herschkorn et~al.(1996)Herschkorn, Pek{\"o}z, and
  Ross]{herschkorn1996policies}
S.~J. Herschkorn, E.~Pek{\"o}z, and S.~M. Ross.
\newblock Policies without memory for the infinite-armed bernoulli bandit under
  the average-reward criterion.
\newblock \emph{Probability in the Engineering and Informational Sciences},
  10\penalty0 (1):\penalty0 21--28, 1996.

\bibitem[Jiang and Wang(2018)]{jiang2018rin}
J.-Y. Jiang and W.~Wang.
\newblock Rin: Reformulation inference network for context-aware query
  suggestion.
\newblock In \emph{CIKM}, pages 197--206, 2018.

\bibitem[Kalvit and Zeevi(2020)]{KalvitZ20}
A.~Kalvit and A.~Zeevi.
\newblock From finite to countable-armed bandits.
\newblock In \emph{NeurIPS}, pages 8259--8269, 2020.

\bibitem[Kleinberg et~al.(2019)Kleinberg, Slivkins, and Upfal]{kleinberg_2019}
R.~Kleinberg, A.~Slivkins, and E.~Upfal.
\newblock Bandits and experts in metric spaces.
\newblock \emph{Journal of the ACM}, 66\penalty0 (4), 2019.

\bibitem[Krause and Guestrin(2011)]{krause2011submodularity}
A.~Krause and C.~Guestrin.
\newblock Submodularity and its applications in optimized information
  gathering.
\newblock \emph{ACM Transactions on Intelligent Systems and Technology},
  2\penalty0 (4):\penalty0 1--20, 2011.

\bibitem[Lattimore and Szepesv{\'a}ri(2020)]{lattimore2020bandit}
T.~Lattimore and C.~Szepesv{\'a}ri.
\newblock \emph{Bandit algorithms}.
\newblock Cambridge University Press, 2020.

\bibitem[Le and Mikolov(2014)]{le2014distributed}
Q.~Le and T.~Mikolov.
\newblock Distributed representations of sentences and documents.
\newblock In \emph{ICML}, pages 1188--1196. PMLR, 2014.

\bibitem[Li et~al.(2010)Li, Chu, Langford, and Schapire]{li2010contextual}
L.~Li, W.~Chu, J.~Langford, and R.~E. Schapire.
\newblock A contextual-bandit approach to personalized news article
  recommendation.
\newblock In \emph{WWW}, pages 661--670, 2010.

\bibitem[Li et~al.(2019)Li, Li, Wu, Zhou, and Wang]{li2019click}
R.~Li, L.~Li, X.~Wu, Y.~Zhou, and W.~Wang.
\newblock Click feedback-aware query recommendation using adversarial examples.
\newblock In \emph{WWW}, pages 2978--2984, 2019.

\bibitem[Magureanu et~al.(2014)Magureanu, Combes, and
  Proutiere]{magureanu2014lipschitz}
S.~Magureanu, R.~Combes, and A.~Proutiere.
\newblock Lipschitz bandits: Regret lower bound and optimal algorithms.
\newblock In \emph{COLT}, pages 975--999. PMLR, 2014.

\bibitem[Marcel and Negre(2011)]{marcel2011survey}
P.~Marcel and E.~Negre.
\newblock A survey of query recommendation techniques for data warehouse
  exploration.
\newblock In \emph{EDA}, pages 119--134, 2011.

\bibitem[Mikolov et~al.(2013)Mikolov, Sutskever, Chen, Corrado, and
  Dean]{mikolov2013distributed}
T.~Mikolov, I.~Sutskever, K.~Chen, G.~S. Corrado, and J.~Dean.
\newblock Distributed representations of words and phrases and their
  compositionality.
\newblock In \emph{NIPS}, pages 3111--3119, 2013.

\bibitem[Mirzasoleiman et~al.(2016)Mirzasoleiman, Karbasi, Sarkar, and
  Krause]{mirzasoleiman2016distributed}
B.~Mirzasoleiman, A.~Karbasi, R.~Sarkar, and A.~Krause.
\newblock Distributed submodular maximization.
\newblock \emph{JMLR}, \penalty0 (1):\penalty0 8330--8373, 2016.

\bibitem[Nemhauser et~al.(1978)Nemhauser, Wolsey, and
  Fisher]{nemhauser1978analysis}
G.~L. Nemhauser, L.~A. Wolsey, and M.~L. Fisher.
\newblock An analysis of approximations for maximizing submodular set
  functions.
\newblock \emph{Mathematical Programming}, \penalty0 (1), 1978.

\bibitem[Ortega and Braun(2010)]{Ortega2010}
P.~A. Ortega and D.~A. Braun.
\newblock A conversion between utility and information.
\newblock In \emph{Proceedings of the 3rd Conference on Artificial General
  Intelligence (2010)}, pages 47--52. Atlantis Press, 2010.

\bibitem[Reimers and Gurevych(2019)]{reimers2019sentence}
N.~Reimers and I.~Gurevych.
\newblock Sentence-{BERT: Sentence Embeddings using Siamese BERT-Networks}.
\newblock In \emph{EMNLP-IJCNLP}, pages 3973--3983, 2019.

\bibitem[Rosset et~al.(2020)Rosset, Xiong, Song, Campos, Craswell, Tiwary, and
  Bennett]{rosset2020leading}
C.~Rosset, C.~Xiong, X.~Song, D.~Campos, N.~Craswell, S.~Tiwary, and
  P.~Bennett.
\newblock Leading conversational search by suggesting useful questions.
\newblock In \emph{Proceedings of The Web Conference 2020}, pages 1160--1170,
  2020.

\bibitem[Savva et~al.(2020)Savva, Anagnostopoulos, Triantafillou, and
  Kolomvatsos]{savva2020}
F.~Savva, C.~Anagnostopoulos, P.~Triantafillou, and K.~Kolomvatsos.
\newblock Large-scale data exploration using explanatory regression functions.
\newblock \emph{Transactions on Knowledge Discovery from Data}, 14\penalty0
  (6), 2020.
\newblock ISSN 1556-4681.

\bibitem[Vaswani et~al.(2017)Vaswani, Shazeer, Parmar, Uszkoreit, Jones, Gomez,
  Kaiser, and Polosukhin]{vaswani2017attention}
A.~Vaswani, N.~Shazeer, N.~Parmar, J.~Uszkoreit, L.~Jones, A.~N. Gomez,
  {\L}.~Kaiser, and I.~Polosukhin.
\newblock Attention is all you need.
\newblock In \emph{NeurIPS}, pages 6000--6010, 2017.

\bibitem[Wang et~al.(2008)Wang, Audibert, and Munos]{wang2008algorithms}
Y.~Wang, J.-Y. Audibert, and R.~Munos.
\newblock Algorithms for infinitely many-armed bandits.
\newblock \emph{NIPS}, pages 1729--1736, 2008.

\bibitem[Wu et~al.(2018)Wu, Xiong, Sun, and Liu]{wu2018query}
B.~Wu, C.~Xiong, M.~Sun, and Z.~Liu.
\newblock Query suggestion with feedback memory network.
\newblock In \emph{WWW}, pages 1563--1571, 2018.

\bibitem[Zhu and Nowak(2020)]{yinglun2020}
Y.~Zhu and R.~Nowak.
\newblock On regret with multiple best arms.
\newblock In \emph{NeurIPS}, pages 9050--9060, 2020.

\end{thebibliography}






\end{document}